\documentclass[11pt]{article}

\usepackage[margin=1in]{geometry}
\usepackage{amsmath,amssymb,amsthm}
\usepackage{booktabs}
\usepackage{enumitem}
\usepackage{tabularx}
\usepackage{algorithm}
\usepackage{algpseudocode}
\usepackage{multirow}
\usepackage{tikz}
\usepackage[round]{natbib}
\usepackage[colorlinks=true,linkcolor=blue,citecolor=blue,urlcolor=blue]{hyperref}

\usepackage{amsmath, amsthm}
\newtheorem{proposition}{Proposition}

\newtheorem{definition}{Definition}

\newcommand{\E}{\mathbb{E}}
\newcommand{\R}{\mathcal{R}}
\newcommand{\Train}{\mathcal{T}}
\newcommand{\Val}{\mathcal{V}}
\newcommand{\Test}{\mathcal{H}}
\newcommand{\elloss}{\ell}
\newcommand{\Rhat}{\widehat{\mathcal{R}}}
\newcommand{\pverr}{\mathrm{pverr}}

\title{An Explicit Feedback Contract for Iterative ML-Agent Harnesses}
\author{
\parbox{\dimexpr\textwidth-2\tabcolsep\relax}{\centering\normalsize
Kehui Yao\textsuperscript{1}, Chuanbo Hu\textsuperscript{1}, Jiao Chen\textsuperscript{1},\\
Ramin Giahi\textsuperscript{1}, Reza Yousefi Maragheh\textsuperscript{1}, Lalitesh Morishetti\textsuperscript{1},\\
Ziwei Hong\textsuperscript{2}, Yiyun Su\textsuperscript{3}, Jianpeng Xu\textsuperscript{1},\\
Evren Korpeoglu\textsuperscript{1}, Kannan Achan\textsuperscript{1}\\[0.8em]
\small\textsuperscript{1}Walmart Global Tech\\
\textsuperscript{2}Lehigh University\\
\textsuperscript{3}Rutgers University
}}
\date{\today}

\begin{document}
\maketitle

\begin{abstract}
Iterative ML agents develop models by editing code, running experiments, reading a validation score, and choosing what to try next. In current harnesses this loop is served by an exact validation score returned after every attempt, which leaves the validation split open to adaptive selection.
In this work, we introduce an explicit feedback contract for ML-agent harnesses, which specifies what an agent may observe, when iteration should stop, and when the test split may be touched. The contract releases a validation score only when a submission clears a noise threshold computed from the paired standard error, so no margin has to be tuned.
We ran experiments on click-through-rate prediction, learning to rank, and a control task with no signal to find. The control is the sharpest case, because the true gain there is zero by construction: an agent reading exact scores ended by reporting a gain of $0.0157$, while under the contract it reported $0.0033$. On both real tasks the contract narrows the validation--test gap, and test performance is slightly higher rather than lower. The largest difference is in when the agents stop. Exact feedback used its entire budget of thirty submissions on two of the three tasks, including the one where nothing can be learned, while the contract halted after four to twelve. A run that exhausts its budget is usually read as diligence. It is better read as a symptom: when the feedback channel is unbounded, noise keeps supplying improvements to chase.

\end{abstract}

\section{Introduction}

Language-model agents are no longer only writing code. They now run experiments: editing code, launching training runs, reading a validation score, and choosing the next edit. A growing body of systems and benchmarks evaluate this loop \citep{romeraparedes2024funsearch,jiang2025aide,lu2024aiscientist,wang2026self, chan2024mlebench,huang2024mlagentbench, pinchuk2026tmlbench}. They differ in what the agent proposes but not in what it observes: an exact validation score after every attempt.

The feedback interface matters more than it appears. Each validation score is an ordinary held-out estimate, but the sequence produced by iteratively keeping score-improving edits is not \citep{dwork2015preserving}. Validation scores are computed on a finite sample and carry noise. From inside the loop, a real improvement and a fluctuation of the same size look identical, and the agent keeps the edit in both cases. After enough rounds, the score it reports describes one particular split rather than the population. Figure~\ref{fig:null-gap} shows the effect on data where the truth is known.

\begin{figure}[t]
  \centering
  \includegraphics[width=0.8\linewidth]{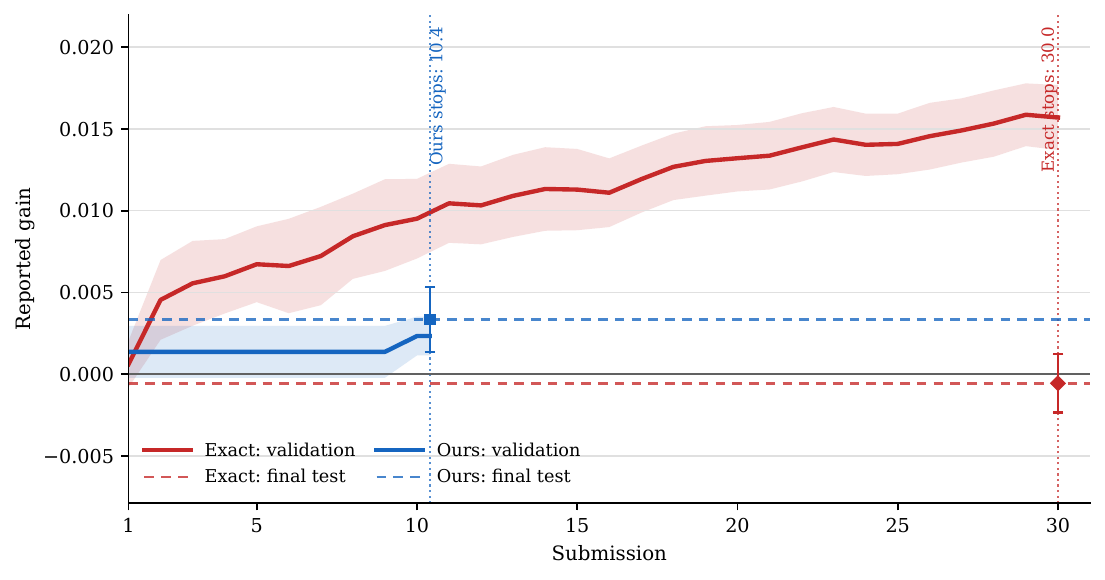}
  \caption{An agent iterating against a validation split whose labels have been permuted: nothing can beat the base rate, so the true gain is zero and a flat line is correct. Plotted is the gain the agent is shown after each submission, over five seeds with standard-error bands. An exact score after every attempt, as current harnesses give, produces the red curve; the contract produces the blue one. Dashed lines mark the one-shot final test gain, never visible during iteration, and dotted lines mark the mean stopping round. The blue trajectory and its final-test marker end at the mean stopping round, $10.4 \pm 4.4$, whereas Exact continues to the $30.0 \pm 0.0$ submission cap.}
  \label{fig:null-gap}
\end{figure}

This is not a hypothetical worry. Held-out accuracy inflates measurably when a benchmark is reused across a research community \citep{recht2019imagenet}. A meta-analysis of Kaggle competitions finds adaptive overfitting where submission counts are large relative to holdout size \citep{roelofs2019meta}. Agents change that ratio. A competition may collect a few thousand submissions from hundreds of teams over several months. A single agent can issue as many against one validation split in an afternoon. There is no rate limit, and no one is watching for the moment the gains stop being real. 

The statistical problem is well understood. Adaptive data analysis shows that a holdout can be reused when the released answers are stable, and differential privacy is one way to achieve this \citep{dwork2015preserving, dwork2015generalization, dwork2015reusable}. There is also a matching hardness result: no efficient estimator can answer every adaptively chosen query accurately \citep{hardt2014preventing}. Practical mechanisms therefore aim at something weaker. For leaderboards, \citet{blum2015ladder} aim at the best score so far, and reach it with a simple rule. Release a new score only when a submission beats the previous best by a margin. Otherwise repeat the old score.
 
An agent is different from a competition submitter in three ways that a threshold alone does not address. First, an agent can retrain and inspect intermediate files, so a contract must say which files are readable. Second, there is no closing date, so the protocol needs a stopping rule. Third, the agent works inside the repository, so anything the harness does not actively withhold—including validation predictions—can leak. These three gaps turn a single-mechanism question into a protocol-design problem.

We propose an explicit feedback contract for iterative ML-agent harnesses, and make three contributions. (1) Agent iteration is identified as a case of adaptive overfitting, and given a precise form. (2) The contract specifies what an agent may see, when it should stop, and when the test set may be used. It needs no parameter tuning, and its stopping rule is guaranteed to fire: the agent halts because no further improvement can be certified, not because a budget ran out. (3) On two real tasks and one control task, an ordinary coding agent overfits the validation data without trying to. The contract narrows the gap at no cost to final performance, and stops after a fraction of the submissions an unbounded interface uses.

\section{Related Work}

\subsection{Agentic ML systems and their evaluation interfaces}
An ML-agent harness settles three things before a run: data splits, budget, and what the agent is told after each attempt. Benchmarks settle the first two with care. Splits are fixed, budgets are capped, and a private portion is held back until the end \citep{chan2024mlebench, huang2024mlagentbench, pinchuk2026tmlbench}. The third is left implicit. An agent receives the metric it asked for, at full precision, as often as it asks. That was inherited rather than chosen. In single-shot evaluation, reporting a validation score exactly is the right thing to do. The systems that placed an agent inside the loop \citep{romeraparedes2024funsearch, jiang2025aide, lu2024aiscientist, wang2026self, zhang2026agenticrs} kept the interface and replaced the submitter with one that queries it hundreds of times.

\subsection{Mechanisms for adaptive reuse}

Two routes exist for reusing a holdout under adaptive queries: perturbing the released answers with a privacy budget \citep{dwork2015preserving, dwork2015reusable}, and thresholding the release so that only scores that clear a margin are reported \citet{blum2015ladder}. Both assume that the submitter cannot access the raw data—an assumption that breaks when the submitter is an agent with file-system access. Table~\ref{tab:related-work} shows the gap on the agent side. What it cannot show is why agentic ML systems never restricted their feedback channel, even though mechanisms for doing so already existed in adaptive data analysis.

\begin{table}[t]
\centering
\small
\setlength{\tabcolsep}{5pt}
\begin{tabular}{@{}lccc@{}}
\toprule
\textbf{Line of work} & \textbf{Iterative} & \textbf{Bounded}
& \textbf{Held-out} \\
& \textbf{agent} & \textbf{feedback} & \textbf{reporting} \\
\midrule
\multicolumn{4}{@{}l}{\emph{Agentic ML systems}} \\
\quad FunSearch~\cite{romeraparedes2024funsearch}
  & \checkmark & --- & --- \\
\quad AIDE~\cite{jiang2025aide}
  & \checkmark & --- & --- \\
\quad AI Scientist~\cite{lu2024aiscientist}
  & \checkmark & --- & --- \\
\quad Agentic recommenders~\cite{wang2026self,zhang2026agenticrs}
  & \checkmark & --- & --- \\[2pt]
\multicolumn{4}{@{}l}{\emph{ML-agent benchmarks}} \\
\quad MLE-bench~\cite{chan2024mlebench}
  & \checkmark & --- & $\circ$ \\
\quad MLAgentBench~\cite{huang2024mlagentbench}
  & \checkmark & --- & $\circ$ \\
\quad TML-Bench~\cite{pinchuk2026tmlbench}
  & \checkmark & --- & $\circ$ \\[2pt]

\midrule
\textbf{Ours}
  & \checkmark & \checkmark & \checkmark \\
\bottomrule
\end{tabular}
\caption{Positioning of this work. \emph{Iterative agent}: an agent revises code and experiments in response to feedback. \emph{Bounded feedback}: the information released per submission is restricted, with a stated guarantee. \emph{Held-out reporting}: reported numbers come from a split never queried during development. $\circ$ marks partial coverage.}
\label{tab:related-work}
\end{table}

\begin{figure}[tbp]
  \centering
  \includegraphics[width=\linewidth]{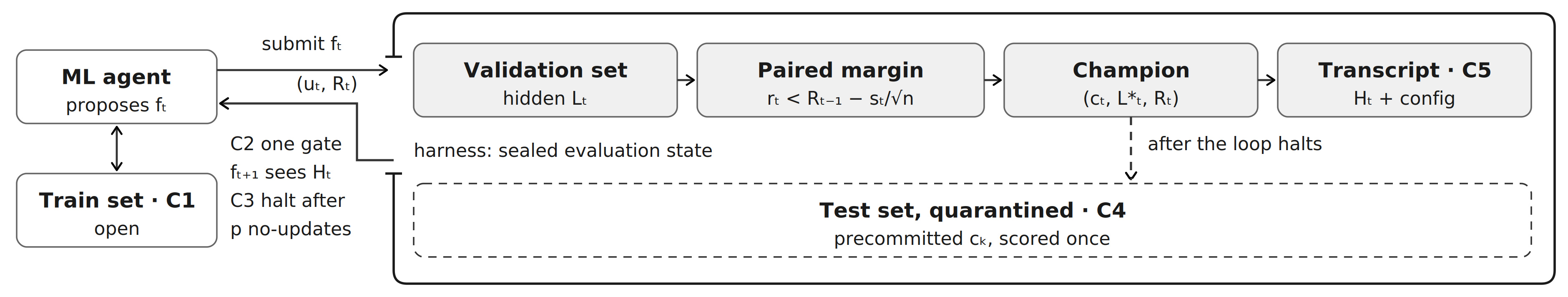}
  \caption{The feedback contract as a sealed harness. Each round it submits one candidate $f_t$ and receives only the pair $(u_t, R_t)$. Inside the harness, top to bottom in execution order: hidden validation losses, threshold rule, champion state, and audit transcript $H_t$. After $p$ consecutive no-updates the loop halts and the final champion $c_k$ is evaluated once on the test split $H$.}
  \label{fig:harness-contract}
\end{figure}

\section{Problem Formulation}
\label{sec:formulation}

\paragraph{Setting.} An agent develops a model over $k$ rounds against a harness. The harness holds three disjoint splits $\Train$, $\Val$, and $\Test$. The agent uses $\Train$ freely. It can reach $\Val$ only through the harness, which keeps the labels. It does not touch $\Test$ until it has stopped.

Each round has three steps. The agent submits one candidate, written $f_t$. The harness scores that candidate on $\Val$ and keeps the score to itself. The harness then releases a single pair $(u_t,R_t)$: whether the released score changed, and what it now stands at. Nothing else comes back. The transcript $H_t$ of pairs released up to round $t$ is everything that leaves the harness. What rule decides $(u_t,R_t)$ is the subject of Section~\ref{sec:contract}.

\paragraph{Adaptivity.} The agent writes $f_t$ after reading $H_{t-1}$.
 
\begin{definition}[Admissible sequence]
\label{def:admissible}
    A sequence $f_1,\ldots,f_k$ is admissible if $f_t$ depends only on $\Train$, the repository state, earlier edits, and $H_{t-1}$, and not on the labels or per-example losses of $\Val$, on validation prediction files, or on any feedback from $\Test$.
\end{definition}

To see why this matters, let $\elloss(f,z)\in[0,1]$ be a bounded loss
on an example $z$, and let
\[
    \R(f)=\E_{z\sim P}\bigl[\elloss(f,z)\bigr]
\]
be the risk of $f$ under the unknown distribution $P$ the data comes
from. This is the quantity of interest, and nobody can observe it.
What the harness computes instead is the average loss over the $n$
examples of $\Val$,
\[
    \Rhat_{\Val}(f)=\frac{1}{n}\sum_{i=1}^{n}\elloss(f,z_i).
\]

For a model fixed in advance, the second estimates the first without bias. Here the model is not fixed in advance. Candidate and sample are dependent, so the guarantee lapses, and the number an agent reports at the end is the smallest score it happened to meet along an adaptively chosen path.

\paragraph{Objective.} A harness cannot promise that every released score is close to the risk of the candidate that produced it, since that is unachievable under adaptivity \citep{hardt2014preventing}. It can promise something weaker and still useful: that the released score stays close to the risk of the best candidate submitted so far.
 
\begin{definition}[Public score error]
\label{def:pverr}
For an admissible sequence with releases $R_1,\ldots,R_k$,
\[
    \pverr(R_1,\ldots,R_k)
    =\max_{t\leq k}\Bigl|\min_{i\leq t}\R(f_i)-R_t\Bigr| .
\]
\end{definition}
 
A release rule should keep $\pverr$ small for every admissible sequence, while leaving $H_k$ informative enough that the agent can still choose among its own candidates. The two requirements pull against each other, and Section~\ref{sec:contract} settles where to stand between them.

\section{The Feedback Contract: Thresholded Best-So-Far Release}
\label{sec:contract}

\subsection{The contract: five clauses}
\label{sec:clauses}

A harness has to decide five things before a run starts; deciding them during one would itself be information. Figure~\ref{fig:harness-contract} shows the arrangement.

\begin{description}
\item[\textnormal{(C1)} Splits.] $\Train$ is open. $\Val$ is reachable only by submitting a candidate; its labels, per-example losses, and prediction files stay inside the harness. $\Test$ is unreachable while the agent iterates.

\item[\textnormal{(C2)} Released feedback.] Each round takes one submission and returns one pair $(u_t,R_t)$. Nothing else leaves. A rejected submission is not told by how much it missed, so the agent cannot steer by the size of its failures.

\item[\textnormal{(C3)} Stopping.] A budget $T_{\max}$ and a patience $p$ are fixed in advance. Iteration halts at the budget or after $p$ consecutive non-updates. Patience counts updates, not score changes: an update has cleared a threshold, whereas a score change need not have.

\item[\textnormal{(C4)} Test quarantine.] A final model is named before any test feedback exists, by default the champion $c_k$. $\Test$ is then evaluated once, and only aggregate metrics are released.

\item[\textnormal{(C5)} Recorded artifacts.] The harness stores the configuration, hidden loss vectors, transcript, seeds, agent identifiers, and final metrics, in enough detail to replay the transcript without rerunning the agent.

\end{description}

\subsection{A parameter-free release rule}
\label{sec:rule}

C2 says a score is released only when a submission is good enough, and an algorithm has to say how good. Fixing a margin $\eta$ in advance does not work: too high discards real gains, too low releases almost every round, and the right value depends on task noise that is unknown beforehand. The rule below computes the margin from the data instead.
 
The harness remembers three things between rounds: the score currently on release, the candidate that earned it, which we call the \emph{champion}, and that candidate's loss on each of the $n$ validation examples. Algorithm~\ref{alg:release} scores a new candidate on the same examples, compares row by row against the champion, and releases a new score only if the improvement exceeds the noise in that comparison.

\begin{algorithm}[t]
\caption{Parameter-free thresholded release}
\label{alg:release}
\begin{algorithmic}[1]
\Require splits $\Train$, $\Val=\{z_i\}_{i=1}^n$, $\Test$; agent
policy $\pi$; budget $T_{\max}$; patience $p$
\State $R_0\gets\infty$, $c_0\gets\emptyset$, $L^\star_0\gets(0,\ldots,0)$, $H_0\gets\emptyset$
\For{$t=1,\ldots,T_{\max}$}
  \State agent proposes $f_t$ from $\Train$, repository state, and $H_{t-1}$
  \State $L_t\gets(\elloss(f_t,z_i))_{i=1}^n$ \Comment{hidden}
  \State $r_t\gets n^{-1}\sum_i L_{t,i}$, \quad $s_t\gets\mathrm{std}(L_t-L^\star_{t-1})$
  \If{$r_t < R_{t-1}-s_t/\sqrt{n}$}
    \State $R_t\gets[r_t]_{1/n}$, \; $c_t\gets f_t$, \; $L^\star_t\gets L_t$, \; $u_t\gets\texttt{update}$
  \Else
    \State $R_t\gets R_{t-1}$, \; $c_t\gets c_{t-1}$, \; $L^\star_t\gets L^\star_{t-1}$, \; $u_t\gets\texttt{no\_update}$
  \EndIf
  \State $H_t\gets H_{t-1}\,\|\,(u_t,R_t)$; release $(u_t,R_t)$
  \State \textbf{break} if the last $p$ events were all \texttt{no\_update}
\EndFor
\State evaluate the precommitted final model once on $\Test$; release aggregate metrics
\end{algorithmic}
\end{algorithm}

The quantity $s_t/\sqrt{n}$ on line 6 is the standard error of the average difference in loss between candidate and champion on the same $n$ examples. A candidate must win by at least one standard error of that difference, so on a noisy task the bar rises by itself and on a clean one it falls. An accepted score is released rounded to a multiple of $1/n$, written $[\cdot]_{1/n}$.

\subsection{Adaptive validity of the released transcript}
\label{sec:validity}

A threshold makes the interaction countable. If the score only moves down by at least $\eta$ and is reported in multiples of $\eta$, it can move at most $1/\eta$ times and take at most $1/\eta$ values, so few transcripts are possible, and a bound that holds for one model fixed
in advance can be applied to each and combined. Proposition~\ref{prop:accuracy} carries that argument out, bounding $\pverr$ (Definition~\ref{def:pverr}): the gap between the released score and the risk of the best candidate submitted so far.

\begin{proposition}[Best-so-far accuracy]
\label{prop:accuracy}
Fix $\eta$ and consider the variant of Algorithm~\ref{alg:release}
that updates when $\Rhat_{\Val}(f_t)<R_{t-1}-\eta$ and releases
$[\Rhat_{\Val}(f_t)]_\eta$. For every admissible sequence
(Definition~\ref{def:admissible}) of length $k$, setting
$\eta=O(n^{-1/3}\log^{1/3}(kn))$ gives
$\pverr=O(n^{-1/3}\log^{1/3}(kn))$ with high probability
\citep{blum2015ladder}.
\end{proposition}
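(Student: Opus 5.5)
The plan is the Ladder compression argument of \citet{blum2015ladder}, adapted to the admissible sequences of Definition~\ref{def:admissible}. Condition on $\Train$, the repository state, and the agent's internal randomness. By admissibility, $f_t$ is then a deterministic function of the transcript prefix $H_{t-1}$. The transcript is the only channel from $\Val$ to the agent: by (C1)--(C2) no labels, per-example losses, or prediction files leak.

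\emph{Counting transcripts.} Under the $\eta$-variant, each update lowers $R_t$ by more than $\eta$. Every released value is a multiple of $\eta$ in $[0,1]$, apart from the initial $R_0=\infty$. Hence there are at most $1/\eta+1$ updates. A transcript of length $k$ is determined by the rounds at which updates occur and the values released there. This gives at most
\[
  |\mathcal{H}|\le \binom{k}{\le 1/\eta+1}\,(1/\eta+1)^{1/\eta+1}\le \exp\bigl(O(\eta^{-1}\log(k/\eta))\bigr)
\]
possible transcripts.

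\emph{Fixed-tree concentration and union bound.} Each transcript $h\in\mathcal{H}$ fixes the submitted models $f_1^h,\ldots,f_k^h$ in advance, independently of $\Val$. There are at most $k|\mathcal{H}|$ such (transcript, round) models. Hoeffding's inequality gives $|\Rhat_{\Val}(f)-\R(f)|\le\epsilon$ for each fixed $f$ with probability at least $1-2e^{-2n\epsilon^2}$. A union bound over all of them therefore gives uniform accuracy $\epsilon$ on every model that could be submitted along any realizable path, with probability at least $1-\delta$, where
\[
  \epsilon=O\Bigl(\sqrt{\bigl(\eta^{-1}\log(k/\eta)+\log(k/\delta)\bigr)/n}\Bigr).
\]
The actual run follows one of these transcripts, so the bound holds for the realized $f_1,\ldots,f_k$.

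\emph{Deterministic step on the good event.} Fix $t$. Let $i^*\le t$ attain $\min_{i\le t}\R(f_i)$, and let $j\le t$ be the last update round, so $R_t=[\Rhat_{\Val}(f_j)]_\eta$.

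For the upper direction, $R_t\le \Rhat_{\Val}(f_j)+\eta$. By the update rule, every later candidate, $f_{i^*}$ included, failed to beat $R_{j}$ by $\eta$. If $i^*\le j$, the running threshold was at most $\Rhat_{\Val}(f_{i^*})+O(\eta)$ after round $i^*$; this uses either that $f_{i^*}$ updated, or that it was rejected against a threshold already within $\eta$. Either way $R_t\le \Rhat_{\Val}(f_{i^*})+2\eta\le \R(f_{i^*})+\epsilon+2\eta$.

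For the lower direction, $R_t\ge \Rhat_{\Val}(f_j)-\eta\ge \R(f_j)-\epsilon-\eta\ge \min_{i\le t}\R(f_i)-\epsilon-\eta$.

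Thus $\pverr\le \epsilon+2\eta$. Balancing $\eta$ against $\sqrt{\eta^{-1}\log(kn)/n}$, using $\log(k/\eta)=O(\log(kn))$ for $\eta\ge n^{-1}$, gives $\eta^3\asymp \log(kn)/n$. This yields $\eta=\Theta(n^{-1/3}\log^{1/3}(kn))$ and $\pverr=O(n^{-1/3}\log^{1/3}(kn))$ with high probability.

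The main obstacle is the reduction to fixed models. The union bound is valid only if the agent's dependence on $\Val$ flows entirely through $H_{t-1}$. This is exactly what Definition~\ref{def:admissible} and clauses (C1)--(C2) enforce, and I would spell out the conditioning on all non-validation randomness carefully. The rounding and off-by-$\eta$ bookkeeping in the upper direction is the other place needing care, but it only affects constants.
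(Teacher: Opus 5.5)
Your proposal is correct and follows the paper's own route. The paper gives only a sketch: thresholding plus rounding leaves few reachable transcripts, so a bound for a model fixed in advance can be applied to each one and combined, with details deferred to \citet{blum2015ladder}. Your transcript count, Hoeffding union bound over the tree of possible submissions, deterministic $\epsilon+O(\eta)$ step, and balancing $\eta^3\asymp\log(kn)/n$ are exactly that Ladder compression argument spelled out. Only the constants are loose: monotonicity of $R_t$ already gives $R_t\le\Rhat_{\Val}(f_{i^*})+\eta$ in the upper direction.
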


The dependence on $k$ is logarithmic: submitting more often does not move the released score. The same counting bounds how long a run can last, which a competition had no use for but a development loop does.

\begin{proposition}[Termination]
\label{prop:termination}
Under the fixed-margin variant a work batch contains at most
$\lceil 1/\eta\rceil+1$ accepted updates, so a run governed by C3
halts at round
$T_{\mathrm{stop}}\leq\min\{T_{\max},\,p\,(\lceil 1/\eta\rceil+2)\}$.
\end{proposition}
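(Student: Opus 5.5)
The argument is a counting argument in two layers. First I bound the number of accepted updates, using only that released scores live in $[0,1]$ and strictly decrease by more than $\eta$ at each update. Then I convert that bound into a bound on the stopping round using the patience clause C3.

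\emph{Step 1: counting updates.} The first update replaces $R_0=\infty$ by $R_{t}=[\Rhat_{\Val}(f_t)]_\eta\in[0,1]$. This uses that $\elloss\in[0,1]$ and that rounding to a multiple of $\eta$ keeps the value in $[0,\lceil 1/\eta\rceil\eta]$. Each later update requires $\Rhat_{\Val}(f_t)<R_{t-1}-\eta$. Because both $R_{t-1}$ and the new release are multiples of $\eta$, the released score drops by at least one grid step, i.e.\ by at least $\eta$. The released values after the first update therefore form a strictly decreasing chain in the grid $\{0,\eta,2\eta,\ldots,\lceil 1/\eta\rceil\eta\}$. That grid has $\lceil 1/\eta\rceil+1$ points, so there are at most $\lceil 1/\eta\rceil+1$ updates in total. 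This holds deterministically for every admissible sequence, with no probability involved.

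The main obstacle is the rounding convention. If $[\cdot]_\eta$ rounds up, the release could sit slightly above the raw score, so I have to check that an update still lowers the grid value. It does: $\Rhat<R_{t-1}-\eta$ with $R_{t-1}$ on the grid gives $[\Rhat]_\eta\le R_{t-1}-\eta$ under either floor or ceiling rounding. I would state this monotonicity explicitly. The "$+1$" slack also absorbs the off-grid starting value $R_0=\infty$ and any nearest-rounding ambiguity.

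\emph{Step 2: from updates to rounds.} I partition the run at update rounds. Before the first update, between consecutive updates, and after the last update, C3 allows at most $p-1$ consecutive no-updates before the $p$-th one triggers a halt. With $U\le\lceil 1/\eta\rceil+1$ updates there are $U+1$ such gaps. Hence
\[
T_{\mathrm{stop}}\le U+(U+1)(p-1)+1 = p\,(U+1)\le p\,(\lceil 1/\eta\rceil+2).
\]
The budget clause independently caps $T_{\mathrm{stop}}\le T_{\max}$, and together these give the stated minimum. I read "work batch" as one run of the loop between C3 resets, so the count in Step 1 applies per batch.
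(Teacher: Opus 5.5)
Your proposal is correct and follows the paper's proof. The paper likewise bounds the number of updates by the monotone $\eta$-step descent of the released score from its first finite value, then cuts the rounds at the $U$ update rounds into at most $U+1$ stretches of fewer than $p$ non-updates, giving $T_{\mathrm{stop}}\le p(U+1)$. Your grid/pigeonhole version of the first step is slightly more careful than the paper's: the paper says each update lowers the released score by ``more than $\eta$,'' yet under nearest or ceiling rounding the rounded release can drop by exactly $\eta$, though this does not change the count.
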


\begin{proof}
The first accepted update sets $R_t$ to a finite value in $[0,1]$, and each subsequent one lowers it by more than $\eta$, so at most $\lceil 1/\eta\rceil$ can follow. Suppose the run has not halted by round $t$. Then no $p$ consecutive rounds are free of updates, so the $U$ update rounds cut $1,\ldots,t$ into at most $U+1$ stretches of non-updates, each shorter than $p$, giving $t<p(U+1)$.
\end{proof}

Stopping is thus part of the protocol rather than a heuristic placed on top of it. Without a margin the same rule never fires: an adaptive submitter reading exact scores can drive the observed loss to $\tfrac12-\Omega(\sqrt{k/n})$ on data with no signal \citep[Thm.~5.1]{blum2015ladder}. Section~\ref{sec:experiments} reports the stopping behaviour of both interfaces.

Both propositions are stated for a fixed margin. Algorithm~\ref{alg:release} uses $\max\{s_t/\sqrt{n},\,\eta_{\min}\}$ with $\eta_{\min}=c\,n^{-1/3}\log^{1/3}(kn)$ fixed before the run from $n$ and $T_{\max}$, so both apply with $\eta=\eta_{\min}$: the floor is what keeps the number of reachable transcripts finite, while above it the margin remains data-dependent and untuned.

\section{Results}
\label{sec:experiments}

\subsection{Experimental Setup}

\paragraph{Datasets.}  Criteo-style click-through-rate prediction is binary classification on a processed dataset with $100{,}000$ training examples, $25{,}000$ validation examples, and an independent $50{,}000$-example test set; we report AUC. MSLR-WEB10K uses the official Fold1 split, with $6{,}000$ training, $2{,}000$ validation, and $2{,}000$ test queries. Query groups remain intact; the base ranker is a \texttt{HistGradient\-Boosting\-Regressor} fitted on $150{,}000$ rows from the training split, and we report macro NDCG@10. Because NDCG decomposes per query rather than per row, the sample size entering Proposition~\ref{prop:accuracy} is the number of validation queries, roughly $2{,}000$, so the guarantee is weaker here than the row count suggests. A third
dataset is a no-signal control, obtained by permuting labels before
splitting and matched in dimension to Criteo: the base rate is population-optimal, so any apparent gain is selection noise by construction.

\paragraph{Conditions.} \emph{Exact} returns $\Rhat_{\Val}(f_t)$ after
every submission, as current harnesses do. \emph{Ours} is
Algorithm~\ref{alg:release}. Patience applies to both, but only
\emph{Ours} can count it on accepted updates; Exact has no update events and falls back on observed improvement. We run three independent attack seeds for Criteo and MSLR and five for the no-signal control.

\paragraph{Metrics.} We report the stopping round, test performance of the precommitted model, and the validation--test gap, defined as validation minus test. A positive gap means the selection signal was optimistic; a negative one means the reverse.

\subsection{Feedback interfaces compared}

\begin{table}[t]
\centering
\small
\setlength{\tabcolsep}{3.5pt}
\begin{tabular}{@{}llccc@{}}
\toprule
\textbf{Task (metric)}
& \textbf{Method}
& \textbf{Stop iter. } $\downarrow$

& \textbf{Test} $\uparrow$
& \textbf{Val.--test gap} $\downarrow$ \\
\midrule
\multirow{2}{*}{Criteo (AUC)}
& Exact
& 30.0 $\pm$ 0.0

& 0.7820 $\pm$ 0.0001
& 0.0060 $\pm$ 0.0001 \\
& Ours
& \textbf{12.3} $\pm$ 4.4

& \textbf{0.7823} $\pm$ 0.0000
& \textbf{0.0055} $\pm$ 0.0000 \\
\midrule
\multirow{2}{*}{MSLR (NDCG@10)}
& Exact
& 27.0 $\pm$ 0.0

& 0.4446 $\pm$ 0.0003
& 0.0134 $\pm$ 0.0003 \\
& Ours
& \textbf{4.3} $\pm$ 0.7

& \textbf{0.4451} $\pm$ 0.0001
& \textbf{0.0123} $\pm$ 0.0001 \\
\midrule
\multirow{2}{*}{No-signal (Gain)}
& Exact
& 30.0 $\pm$ 0.0

& $-0.0006$ $\pm$ 0.0018
& 0.0163 $\pm$ 0.0026 \\
& Ours
& \textbf{10.4} $\pm$ 4.4

& \textbf{0.0033} $\pm$ \textbf{0.0020}
& \textbf{0.0000} $\pm$ \textbf{0.0025} \\
\bottomrule
\end{tabular}
\caption{Results under a maximum of 30 submissions, reported as mean
$\pm$ standard error across independent seeds. Validation and test metrics
are not comparable across tasks. The validation--test gap is validation
minus test, so a larger positive value indicates more optimistic public
leaderboard selection. For the no-signal control, gain is measured relative
to the base-rate predictor, whose population-optimal gain is zero.}
\label{tab:main-results}
\end{table}

Table~\ref{tab:main-results} reports both interfaces on the three
datasets.

\paragraph{The budget stops Exact; the mechanism stops Ours.} Exact runs to the cap on Criteo and the control, and to round 27 on MSLR. Ours halts at a third of the budget or less on all three. This is the behaviour anticipated in Section~\ref{sec:validity}: without a margin observed improvements keep arriving, so a rule watching them has little reason to fire. Stopping is not equally decisive everywhere. The stopping round differs by several submissions across seeds on Criteo and the control, and is tightest on the ranking dataset.
 
\paragraph{Under exact feedback the public score reports progress that does not exist.} On the control the true gain is zero, so any reported gain is an artefact of selection. Exact ends with a reported gain of $0.0157$ that its test score does not support, leaving a gap of $0.0163$. The contract also ends reporting a gain, but one its test score matches, leaving no gap at all. Neither agent was gaming anything: both kept whichever edit appeared to help. The difference lies entirely in what they were told after each attempt.
 
\paragraph{On real data the effect is real but modest.} The
validation--test gap narrows under the contract by roughly a tenth on both real datasets, and test performance is slightly higher rather than lower. The differences are small in absolute terms, and the claim they support is narrower than a better model: the reported score becomes more trustworthy, and nothing is paid for it.

\subsection{Sensitivity to public-validation size}

A larger public set is the obvious alternative remedy, so Table~\ref{tab:validation-size} asks whether it substitutes for governing the channel on Criteo. It does not, at least over this range. Ours has the smaller signed gap and the higher mean test AUC at every public size. Both advantages are largest at $2{,}000$ examples and shrink as the public set grows, which is what a margin scaled to sampling error should do; at $20{,}000$ examples both methods recover the main-table test AUC, since the training split is fixed across sizes.

\begin{table}[t]
\centering
\small
\setlength{\tabcolsep}{8pt}
\begin{tabular}{@{}llcccc@{}}
\toprule
\textbf{Task} & \textbf{Val. size} & \multicolumn{2}{c}{\textbf{Test} $\uparrow$} & \multicolumn{2}{c}{\textbf{Val.--test gap} $\downarrow$} \\
\cmidrule(lr){3-4}\cmidrule(l){5-6}
 & & \textbf{Exact} & \textbf{Ours} & \textbf{Exact} & \textbf{Ours} \\
\midrule
\multirow{4}{*}{Criteo} & 2{,}000 & 0.7782 $\pm$ 0.0007 & \textbf{0.7816} $\pm$ \textbf{0.0003} & 0.0117 $\pm$ 0.0047 & \textbf{0.0051} $\pm$ 0.0044 \\
 & 5{,}000 & $0.7805 \pm 0.0004$ & \textbf{0.7816} $\pm$ \textbf{0.0003} & 0.0101 $\pm$ 0.0030 & \textbf{0.0081} $\pm$ 0.0032 \\
 & 10{,}000 & $0.7813 \pm 0.0001$ & \textbf{0.7817} $\pm$ \textbf{0.0002} & 0.0086 $\pm$ 0.0009 & \textbf{0.0075} $\pm$ 0.0009 \\
 & 20{,}000 & $0.7818 \pm 0.0002$ & \textbf{0.7821} $\pm$ \textbf{0.0001} & 0.0055 $\pm$ 0.0011 & \textbf{0.0048} $\pm$ 0.0009 \\

\bottomrule
\end{tabular}
\caption{Public-validation size, mean $\pm$ standard error over five seeds. For Criteo, public sets are nested subsets of the original 25k validation split, and the complete 100k training split is fixed across public sizes. Criteo uses the strong hashed logistic-regression configuration from Table~\ref{tab:main-results} and Signed Boosting with at most $30$ matched submissions. Attack strength is selected separately for each seed, public size, and method using validation only.}
\label{tab:validation-size}
\end{table}

\subsection{Fixed margins versus the paired standard error}

\begin{table}[t]
\centering
\small
\setlength{\tabcolsep}{4.5pt}
\begin{tabular}{@{}llccc@{}}
\toprule
\textbf{Task} & \textbf{Update margin}
& \textbf{Stop iter.} $\downarrow$  & \textbf{Test} $\uparrow$
& \textbf{Gap} $\downarrow$ \\
\midrule
\multirow{4}{*}{Criteo CTR}
  & fixed $\eta=0.001$   & $14.6\pm1.9$  & $0.7831\pm0.0013$ & $0.0060\pm0.0006$ \\
  & fixed $\eta=0.005$   & $9.8\pm1.1$   & $0.7684\pm0.0090$ & $0.0066\pm0.0012$ \\
  & fixed $\eta=0.010$   & \textbf{7.8} $\pm$ \textbf{0.4}  & $0.7727\pm0.0025$ & $0.0054\pm0.0003$ \\
  & paired SE (ours)     & $14.2\pm2.4$ &  $\mathbf{0.7840\pm0.0003}$ & $\mathbf{0.0053\pm0.0003}$ \\

\bottomrule
\end{tabular}
\caption{Fixed margin versus the paired standard error. Rows share the harness, prompt, split, budget, and patience, and differ only in the update margin. Proposition~\ref{prop:accuracy} is stated for a fixed $\eta$, whereas the deployed rule computes one.}
\label{tab:margin-ablation}
\end{table}

Table~\ref{tab:margin-ablation} replaces the computed margin with a fixed one on Criteo, asking whether the deployed rule behaves like a well-chosen fixed margin without one having to be chosen. It does: the paired rule achieves both the smallest gap and the highest test AUC on the grid, and it runs as long as any fixed setting, so the advantage is not that it stops sooner.

\subsection{Sensitivity to the agent backbone}

\begin{table}[t]
\centering
\small
\setlength{\tabcolsep}{4pt}
\begin{tabular}{@{}llcccc@{}}
\toprule
\textbf{Agent backbone} & \textbf{Method} & \textbf{Stop iter.} $\downarrow$
& \textbf{Test} $\uparrow$ & \textbf{Val.--test gap} $\downarrow$ \\
\midrule
\multirow{2}{*}{GPT-5.6-Sol (Codex)}
  & Exact & $30.0 \pm 0.0$  & $0.7820 \pm 0.0001$ & $0.0060 \pm 0.0001$ \\
  & Ours  & \textbf{12.3} $\pm$ \textbf{4.4}  & \textbf{0.7823} $\pm$ \textbf{0.0000} & $\mathbf{0.0055 \pm 0.0000}$ \\
\midrule
\multirow{2}{*}{Claude Sonnet 4.6}
  & Exact & $23.4 \pm 3.4$  & $0.7831 \pm 0.0015$ & $0.0058 \pm 0.0001$ \\
  & Ours  & \textbf{22.6} $\pm$ \textbf{1.9}  & $\mathbf{0.7835 \pm 0.0015}$ & $\mathbf{0.0053 \pm 0.0003}$ \\
\bottomrule
\end{tabular}
\caption{Sensitivity to the agent backbone on Criteo CTR, mean $\pm$ standard error over three seeds for Codex; Claude rows retain their separately reported runs. Within each backbone, rows share the same repository, task prompt, split, and outer budget, and differ only in the backbone model and the validation feedback interface. }
\label{tab:agent-sensitivity}
\end{table}

Table~\ref{tab:agent-sensitivity} repeats the Criteo condition on a second backbone. Under Codex the contract cuts the mean stopping round from the cap to twelve while slightly improving test AUC and reducing the gap; those rows match the main-table result and follow the same protocol. Under Claude the two interfaces stop at nearly the same round and both run short of the cap, and the gap narrows by about as much as it does under Codex. Those runs have not been rerun under the current protocol and should be read separately.

\section{Discussion and Limitations}

The no-signal control provides the sharpest evidence. On data where nothing can be learned, an agent given an ordinary prompt and an exact score after every attempt reports an improvement anyway, accumulated slowly enough that no single round looks wrong. This needed no adversary and no unusual agent, only a channel that answers every question. The second finding, which we did not expect, is that exact feedback ran to the budget on every dataset including the one with no signal: an interface that always has something new to report is always worth querying again.
 
The guarantee is narrow, deliberately. It concerns the released transcript rather than the final test score, so the contract makes the reported number trustworthy, not better. It covers a fixed margin, whereas the deployed rule computes one from paired standard errors and is a heuristic. And a thresholded interface returns a selection signal and nothing else, so error analysis has to run on training data or a separately labelled sample. The empirical claim is correspondingly modest: the gap narrows on both real datasets and test performance is slightly higher rather than lower, but the differences are small in absolute terms, three seeds on the real datasets cannot separate a small effect from none, and both datasets are tabular.

\section{Conclusion}

An agent that reads an exact validation score after every attempt is running a selection procedure against a fixed sample, and the score it reports at the end is not what a held-out estimate measures. This is a property of the harness, not of the agent. We proposed a contract that fixes what an agent may see on each split, when its loop should end, and when the test data may be touched, with a release rule that reports a new score only when a submission beats the current best by more than the noise in that comparison. On data with nothing to learn, the contract reports nothing learned while exact feedback reports progress; on real data the gap narrows at no cost; and runs end when improvements run out rather than when the budget does. Harnesses today specify what an agent must produce far more carefully than what it may observe while producing it, and the second question only gets harder as loops grow longer.

\bibliographystyle{plainnat}
\bibliography{references}

\end{document}